\newtheorem{theorem}{Theorem}[section]
\newtheorem{corollary}[theorem]{Corollary} 
\newtheorem{lemma}[theorem]{Lemma}
\newtheorem{observation}[theorem]{Observation}
\newtheorem{remark}[theorem]{Remark}
\newtheorem{example}[theorem]{Example}
\newcommand{\abs}[1]{\lvert#1\rvert}
\def\E {{\mathbb E}}
\def\N{{\mathbb N}}
\def\R{{\mathbb R}}
\def\e{\varepsilon}
\begin{document}
\title{A note on sample complexity of learning binary output neural networks under\\ fixed input distributions}

\author{\IEEEauthorblockN{Vladimir Pestov}
\IEEEauthorblockA{Department of Mathematics and Statistics\\
University of Ottawa\\
Ottawa, Ontario, Canada\\
vpest283@uottawa.ca}
}

\maketitle

\begin{abstract}
We show that the learning sample complexity of a sigmoidal neural network constructed by Sontag (1992) required to achieve a given misclassification error under a fixed purely atomic distribution can grow arbitrarily fast: for any prescribed rate of growth there is an input distribution having this rate as the sample complexity, and the bound is asymptotically tight. The rate can be superexponential, a non-recursive function, etc. We further observe that Sontag's ANN is not Glivenko--Cantelli under any input distribution having a non-atomic part.
\end{abstract}

\begin{IEEEkeywords}
PAC learnability, fixed distribution learning, sample complexity, infinite VC dimension, witness of irregularity, Sontag's ANN, precompactness.
\end{IEEEkeywords}

\IEEEpeerreviewmaketitle

\section{Introduction}
We begin with a quote of the first part of the open problem 12.6 from Vidyasagar's book \cite{vidyasagar} (this problem appears already in the original 1997 version).

{\em
``How can one reconcile the fact that in distribution-free learning, every learnable concept class is also ``polynomially'' learnable, whereas this might not be so in fixed-distribution learning?

In the case of distribution-free learning of concept classes (...) there are only two possibilities:

\noindent
1. $\mathscr C$ has infinite VC-dimension, in which case $\mathscr C$ is not PAC learnable at all.
\\
2. $\mathscr C$ has finite VC-dimension, in which case $\mathscr C$ is not only PAC learnable, but the sample complexity $m_0(\e,\delta)$ is $O(1/\e+\log(1/\delta))$. Let us call such a concept class ``polynomially learnable''.

In other words, there is no ``intermediate'' possibility of a concept class being learnable, but having a sample complexity that is superpolynomial in $1/\e$. 

In the case of fixed-distribution learning, the situation is not so clear. (...) Is there a concept class for which {\em every} algorithm would require a superpolynomial number of samples? The only known way of consructing such a concept class would be to (...) attempt to construct a concept class whose $\e$-covering number grows {\em faster} than any exponential in $1/\e$.
It would be interesting to know whether such a concept class exists.''}

In fact, the existence of a concept class whose sample complexity grows exponentially in $1/\e$ under a given fixed input distribution was already shown in 1991 by Benedek and Itai \cite{BI} (Theorem 3.5). Their example consisted of all finite subsets of a domain.
Later and independently, a rather more natural concept class with such properties (generated by a neural network) was constructed by Barbara Hammer in her Ph.D. thesis \cite{hammer99a} (Example 4.4.3 on page 77), cf. also \cite{hammer99b}.

Here we somewhat strengthen the above results and at the same time show that the phenomenon is quite common.
Suppose that a concept class $\mathscr C$ satisfies a slightly stronger property than having an infinite VC dimension, namely: $\mathscr C$ shatters every finite subset of an infinite set. Fix a sequence $\e_k$ of desired values of learning precision, converning to zero, and let $f$ be an increasing real function on $[0,+\infty)$. Then one can find a probability measure $\mu$ on the domain $\Omega$ of $\mathscr C$ with the property that $\mathscr C$ is PAC learnable under $\mu$, but the sample complexity of learning to precision $\e_k$, $k=1,2,3,\ldots$, is growing as $\Omega(f(\e_k^{-1}))$. The prescribed rate of growth can be ridiculouly high, for instance, a non-recursive function. The bound is essentially tight. For example, a well-known sigmoidal feed-forward neural network of infinite VC dimension constructed by Sontag \cite{sontag} has this property.

This naturally brings up a question of behaviour of Sontag's network $\mathcal N$ under non-atomic input distributions. It follows from Talagrand's theory of witness of irregularity \cite{talagrand87,talagrand96} that $\mathcal N$ is not Glivenko--Cantelli with regard to any measure having a non-atomic part. We do not know if a similar property holds for PAC learnability, although it is easy to see non-learnability of $\mathcal N$ for some common measures (the uniform distribution on the interval, the gaussian measure). While discussing a relationship between Glivenko--Cantelli property, PAC learnability, and precompactness, we give an answer to another (minor) question of Vidyasagar.

Note that we find it instructive to present the above observations in the reverse order.
In Conclusion, we suggest a few open problems and a conjecture supported by the results of this note which might shed light on Vidyasagar's problem.

\section{Glivenko--Cantelli classes and learnability}

\subsection{PAC learnability and total boundedness}

Benedek and Itai \cite{BI} had proved that a concept class $\mathscr C$ is PAC learnable under a single probability distribution $\mu$ if and only if $\mathscr C$ is totally bounded in the $L^1(\mu)$-distance. Here we remind their results.

\begin{theorem}[Theorem 4.8 in \cite{BI}; Theorem 6.3 in \cite{vidyasagar}]
\label{th:suf}
Suppose $\mathscr C$ is a concept class, $\e>0$, and that $B_1,\ldots,B_k$ is an $\e/2$-cover for $\mathscr C$. Then the minimal empirical risk algorithm is PAC to accuracy $\e$. In particular, the sample complexity of PAC learning $\mathscr C$ to accuracy $\e$ with confidence $1-\delta$ is
\[m\leq \frac{32}{\e}\log\frac{k}{\delta}.\]
\end{theorem}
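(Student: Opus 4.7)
The plan is to reduce the infinite class $\mathscr C$ to the finite cover $\{B_1,\ldots,B_k\}$ and then invoke the realizable-PAC rate. First I would partition $\mathscr C$ according to the cover by setting $\mathscr C_i = \{c' \in \mathscr C : \mu(c' \triangle B_i) \leq \e/2\}$. By the $\e/2$-cover hypothesis, $\mathscr C = \bigcup_i \mathscr C_i$, and by the triangle inequality in $L^1(\mu)$ each $\mathscr C_i$ has $\mu$-symmetric-difference diameter at most $\e$.

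Let $\hat c$ denote the output of the minimal-empirical-risk algorithm on an i.i.d.\ sample of size $m$ drawn from $\mu$ and labelled by the target $c \in \mathscr C$. Since $c$ itself is consistent with the sample, the minimum empirical error is zero, and so $\hat c$ is also consistent with the sample. To bound $\Pr\{\mu(\hat c \triangle c) > \e\}$ I would observe that this event is contained in $\bigcup_{i=1}^{k} A_i$, where $A_i$ is the event that some $c' \in \mathscr C_i$ satisfying $\mu(c' \triangle c) > \e$ is consistent with the sample.

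The core step is estimating each $\Pr(A_i)$. If $A_i$ holds with witness $c'$, then the triangle inequality forces $\mu(B_i \triangle c) > \e/2$; furthermore, from the identity $c' \triangle c = (c' \triangle B_i) \triangle (B_i \triangle c)$ together with the fact that the sample avoids $c' \triangle c$, one deduces that for every sample point $x_j$ the membership $x_j \in c' \triangle B_i$ is equivalent to $x_j \in B_i \triangle c$. Since $\mu(c' \triangle B_i) \leq \e/2$, this forces an empirical undershoot of the set $B_i \triangle c$ whose true mass exceeds $\e/2$, which is a rare event. A Chernoff/Bernoulli-style tail bound, tailored to the realizable regime, then yields $\Pr(A_i) \leq \exp(-m\e/32)$.

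Taking a union bound over $i = 1, \ldots, k$ and setting $k \exp(-m\e/32) \leq \delta$ produces the claimed sample complexity $m \leq (32/\e)\log(k/\delta)$. The main obstacle lies in obtaining the $1/\e$ rate rather than the $1/\e^2$ rate that a direct Hoeffding bound would give; the improvement hinges on exploiting the fact that in the realizable/consistent regime one is controlling the probability that an i.i.d.\ sample essentially misses a target set, which is captured by a multiplicative Chernoff inequality rather than by additive concentration of the empirical mean.
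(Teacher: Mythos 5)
There is a genuine gap, and it begins with a misreading of the algorithm. The ``minimal empirical risk algorithm'' of Benedek--Itai and Vidyasagar minimizes empirical error over the \emph{finite cover} $\{B_1,\ldots,B_k\}$ and outputs one of the $B_i$; its minimal empirical risk is in general nonzero, since no $B_i$ need agree with the target. You instead analyze an arbitrary consistent hypothesis $\hat c\in\mathscr C$, i.e., empirical risk minimization over $\mathscr C$ itself, and that version of the statement is false. Take $\Omega=[0,1]$ with Lebesgue measure and let $\mathscr C$ consist of $\emptyset$ together with all cofinite subsets (a variant of the finite/cofinite class recalled in Section II of this paper): $\{\emptyset,[0,1]\}$ is an $\e/2$-cover with $k=2$, yet for the target $c=\emptyset$ the hypothesis $[0,1]\setminus\{x_1,\ldots,x_m\}$ is always consistent and always has error $1$. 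In the same example your key estimate $\Pr(A_i)\le\exp(-m\e/32)$ fails outright, with $\Pr(A_1)=1$ for every $m$.

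The precise point where the argument breaks is the sentence ``this forces an empirical undershoot of the set $B_i\triangle c$.'' Your identity correctly shows that on the sample the counts of $B_i\triangle c$ and of $c'\triangle B_i$ coincide; but to convert this into an undershoot of the fixed set $B_i\triangle c$ you would need the empirical measure of $c'\triangle B_i$ to be close to its true measure $\le\e/2$. Since $c'$ is chosen \emph{after} seeing the sample, that is exactly the uniform-convergence statement over the (infinite) ball $\mathscr C_i$ which the covering hypothesis does not supply; in the example above $c'\triangle B_i$ has measure $0$ but empirical measure $1$. The correct route keeps all probabilistic estimates on the $k$ \emph{fixed} sets: the cover element $B_{i_0}$ nearest to the target has true error $\le\e/2$, hence empirical error below $3\e/4$ except with probability $e^{-\Omega(m\e)}$ (multiplicative Chernoff, upper tail), while any $B_j$ with true error $>\e$ has empirical error at least $3\e/4$ except with probability $e^{-\Omega(m\e)}$ (lower tail); a union bound over the $k$ cover elements then gives $m=O(\e^{-1}\log(k/\delta))$. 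Your closing observation that the multiplicative Chernoff bound is what yields the $1/\e$ rather than $1/\e^2$ rate is correct, but it must be applied to the fixed sets $B_j\triangle c$, not to data-dependent ones.
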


Recall that a subset $A$ of a metric space $X$ is {\em $\e$-separated}, or {\em $\e$-discrete,} if, whenever $a,b\in A$ and $a\neq b$, one has $d(a,b)\geq\e>0$. The largest cardinality of an $\e$-discrete subset of $X$ is the {\em $\e$-packing number} of $X$. For example, the following lemma estimates from below the packing number of the Hamming cube. 

\begin{lemma}[\cite{vidyasagar}, Lemma 7.2 on p. 279]
\label{l:2e}
Let $0<\e\leq 1/4$. The Hamming cube $\{0,1\}^n$, equipped with the normalized Hamming distance 
\[d_h(x,y) = \frac 1n \left\vert \{i\colon x_i\neq y_i\}\right\vert,\]
admits a family of elements which are pairwise at a distance of at least $2\e$ from each other of cardinality at least $\exp[2(0.5-2\e)^2n]$.
\end{lemma}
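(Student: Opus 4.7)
\medskip

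\noindent\textbf{Proof proposal.}
The plan is to run a standard volume (a.k.a.\ sphere-packing) argument combined with a Chernoff-type estimate for the binomial tail. All distances here are the normalized Hamming distance $d_h$.

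First, I would let $A\subseteq\{0,1\}^n$ be a \emph{maximal} $2\e$-separated subset; such an $A$ exists because $\{0,1\}^n$ is finite. By maximality, for every $x\in\{0,1\}^n$ there is some $a\in A$ with $d_h(x,a)<2\e$, i.e.\ the open balls $B(a,2\e)$ with $a\in A$ cover the cube. Since all these balls are translates of $B(0,2\e)$ and the counting measure is translation-invariant, one obtains
\[
2^n \;\leq\; \lvert A\rvert \cdot \lvert B(0,2\e)\rvert,
\]
where $\lvert B(0,2\e)\rvert = \sum_{k\leq 2\e n}\binom{n}{k}$ is the number of binary strings at normalized Hamming distance less than $2\e$ from the origin.

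Next, I would bound this sum from above. Let $X$ be a Binomial$(n,1/2)$ random variable; then $\lvert B(0,2\e)\rvert = 2^n\cdot \Pr(X\leq 2\e n)$. Since $\e\leq 1/4$ we have $2\e\leq 1/2$, and the Chernoff/Hoeffding bound for a symmetric Bernoulli sum gives
\[
\Pr\!\left(X\leq 2\e n\right) \;\leq\; \exp\!\bigl[-2(0.5-2\e)^2 n\bigr].
\]
Plugging this into the covering inequality and cancelling $2^n$ yields
\[
\lvert A\rvert \;\geq\; \exp\!\bigl[2(0.5-2\e)^2 n\bigr],
\]
which is the desired estimate.

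The argument is essentially routine, so there is no real obstacle; the only point that needs a sentence of justification is the Chernoff inequality for the lower tail of the symmetric binomial, and this is where the hypothesis $\e\leq 1/4$ is used, since it guarantees $2\e$ lies on the correct side of the mean $1/2$ so that Hoeffding's bound applies with the exponent $2(0.5-2\e)^2 n$.
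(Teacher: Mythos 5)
Your argument is correct: the maximal-separated-set (Gilbert--Varshamov-type) volume bound combined with Hoeffding's inequality for the lower binomial tail gives exactly the stated estimate, and the hypothesis $\e\leq 1/4$ is used precisely where you say it is. The paper itself only cites this as Lemma 7.2 of Vidyasagar's book without reproducing a proof, and your argument is the standard one given there, so there is nothing further to compare.
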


The following is a source of lower bounds on the sample complexity. 

\begin{theorem}[Lemma 4.8 in \cite{BI}; Theorem 6.6 in  \cite{vidyasagar}]
\label{th:nec}
Suppose $\mathscr C$ is a given concept class, and let $\e>0$ be specified. Then any algorithm that is PAC to accuracy $\e$ requires at least $\lg M(2\e,{\mathscr C},L^1(\mu))$ samples, where $M(2\e,{\mathscr C},L^1(\mu))$ denotes the $2\e$-packing number of the concept class $\mathscr C$ with regard to the $L^1(\mu)$-distance. 
\end{theorem}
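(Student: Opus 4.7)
The plan is to prove this by an elementary counting argument on the $2^m$ possible label-vectors of a size-$m$ sample, in the spirit of the original Benedek--Itai lower bound.

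First I would unpack the definitions. Set $N := M(2\e, \mathscr{C}, L^1(\mu))$ and choose concepts $c_1, \ldots, c_N \in \mathscr{C}$ pairwise $2\e$-separated in $L^1(\mu)$. By the triangle inequality the open $L^1(\mu)$-balls $B(c_i, \e)$ around them are pairwise disjoint, so no single hypothesis can be within $\e$ of two of the $c_i$ at once. If $A$ is a (for definiteness, deterministic) algorithm PAC to accuracy $\e$ with sample size $m$ and some fixed confidence $1-\delta$, then for each $i$,
\[
p_i \;:=\; \mu^m\bigl\{x\in\Omega^m : A\bigl((x,c_i(x))\bigr)\in B(c_i,\e)\bigr\} \;\geq\; 1-\delta.
\]

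The core step is the observation that for a fixed sample-point vector $x = (x_1,\ldots,x_m) \in \Omega^m$, the output $A((x,c(x)))$ depends on $c$ only through the bit string $(c(x_1),\ldots,c(x_m)) \in \{0,1\}^m$, so the map $c \mapsto A((x,c(x)))$ takes at most $2^m$ distinct values in the hypothesis space. Combined with the disjointness of the balls $B(c_i,\e)$, this forces
\[
\#\bigl\{i : A\bigl((x,c_i(x))\bigr)\in B(c_i,\e)\bigr\} \;\leq\; 2^m \qquad \text{for every fixed } x,
\]
since any two indices that ``succeed'' at $x$ must induce different labelings of $x$ (else $A$ returns a common hypothesis, which by disjointness cannot sit in both their $\e$-balls). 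Integrating this pointwise bound against $\mu^m$ and interchanging sum with expectation yields
\[
2^m \;\geq\; \E_{x\sim\mu^m}\#\{i : \text{success for } c_i \text{ at } x\} \;=\; \sum_{i=1}^N p_i \;\geq\; N(1-\delta),
\]
from which, for $\delta$ bounded away from $1$ in the standard PAC convention, we obtain $m \geq \lg N$ up to an additive constant. Randomized algorithms are handled by averaging additionally over the internal coins.

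I expect the only genuine subtlety to lie in the counting step: one must notice that distinct targets $c_i, c_j$ may induce the \emph{same} labeling of a given $x$, in which case $A$ returns the same hypothesis for both, and then disjointness of the $\e$-balls permits at most one of them to ``succeed''. This is precisely what caps the number of simultaneously successful indices by $2^m$ rather than by $N$. Everything else is routine: triangle inequality, linearity of expectation, and a logarithm.
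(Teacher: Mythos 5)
Your argument is correct and is essentially the classical Benedek--Itai counting proof that the cited sources give for this result: disjoint $\e$-balls around a $2\e$-separated family, the observation that the learner's output on a fixed sample factors through the $2^m$ possible labelings so at most $2^m$ targets can succeed simultaneously, and an averaging step giving $2^m\geq (1-\delta)M(2\e,\mathscr{C},L^1(\mu))$. Note that what you actually obtain is the bound $m\geq\lg\bigl[(1-\delta)M(2\e,\mathscr{C},L^1(\mu))\bigr]$, which is exactly the form in Benedek--Itai and Vidyasagar; the statement above silently drops the $(1-\delta)$ factor, so your ``up to an additive constant'' caveat is the honest reading.
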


For the most comprehensive presentation of PAC learnability under a single distribution, see \cite{vidyasagar}, Ch. 6.

\subsection{Glivenko--Cantelli classes}

A function class $\mathscr F$ on a domain (a standard Borel space) $\Omega$ is {\em Glivenko--Cantelli} with regard to a probability distribution $\mu$ (\cite{dudley}, Ch. 3), or else has the property of {\em uniform convergence of empirical means} ({\em UCEM} property) \cite{vidyasagar}, if for each $\e>0$
\begin{equation}
\label{eq:glivenko}
\sup_{\mu\in {\mathcal P}}\mu^{\otimes n}\left\{\sup_{f\in{\mathscr F}}\left\vert \E_{\mu}(f)-  \E_{\mu_n}(f) \right\vert\geq \e\right\}\to 0\mbox{ as }n\to\infty.
\end{equation}
Here $\mu^{\otimes n}$ is the product measure on $\Omega^n$, and $\mu_n$ stands for the empirical (uniform) measure on $n$ points, sampled from the domain in an i.i.d. fashion. We assume $\mathscr F$ to assume values in an interval (i.e., to be uniformly bounded).
The notion applies to neural networks as well, if $\mathscr F$ denotes the family of output functions corresponding to all possible values of learning parameters. 

Every Glivenko--Cantelli class $\mathscr F$ is PAC learnable, which explains the important role of this notion. In fact, every consistent learning rule $\mathcal L$ will learn $\mathscr F$. 
We find it instructive to give a different proof, replying in passing to a remark of Vidyasagar \cite{vidyasagar}, p. 241. After proving that every Glivenko--Cantelli concept class $\mathscr C$ with regard to a fixed measure $\mu$ is precompact with regard to the $L^1(\mu)$-distance,
the author remarks that his proof is both indirect (Glivenko--Cantelli $\Rightarrow$ PAC learnable $\Rightarrow$ precompact), and does not extend to function classes, so it is not known to the author whether the result holds if $\mathscr C$ is replaced with a function class $\mathscr F$.

The answer is yes, as is (implicitely) stated in \cite{talagrand96} (p. 379, the beginning of the proof of Proposition 2.5), but a deduction is also rather roundabout (proving first the absence of a witness of irregularity).
In fact, the result is really very simple.

\begin{observation}
Every (uniformly bounded) Glivenko--Cantelli function class $\mathscr F$ with regard to a fixed probabillty measure $\mu$ is precompact in the $L^1(\mu)$-distance.
\end{observation}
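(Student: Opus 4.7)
The plan is a two-step argument. First I would upgrade the Glivenko--Cantelli property for $\mathscr{F}$ to a Glivenko--Cantelli property for the class of absolute differences $\widetilde{\mathscr{F}} := \{\,|f-g| : f,g \in \mathscr{F}\,\}$. Then a routine finite-dimensional covering argument on a single favourable sample produces a finite $\e$-net for $\mathscr{F}$ in $L^{1}(\mu)$.

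For the first step, observe that $\mathscr{F}-\mathscr{F}$ is immediately GC, since
\[
\sup_{f,g\in\mathscr{F}}\bigl|(\mu-\mu_n)(f-g)\bigr| \;\leq\; 2\,\sup_{f\in\mathscr{F}}\bigl|(\mu-\mu_n)f\bigr|.
\]
To pass from $\mathscr{F}-\mathscr{F}$ to $\widetilde{\mathscr{F}}$ through the $1$-Lipschitz, zero-preserving map $t\mapsto|t|$, I would invoke the standard equivalence (for uniformly bounded classes, via Gin\'e--Zinn symmetrization) between the GC property and vanishing of Rademacher averages, combined with Talagrand's contraction principle, which bounds the Rademacher averages of $\widetilde{\mathscr{F}}$ by those of $\mathscr{F}-\mathscr{F}$. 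This symmetrization-plus-contraction step is the main technical obstacle; everything after it is elementary geometry of $\R^{n}$.

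Granted GC for $\widetilde{\mathscr{F}}$, fix $\e > 0$ and choose $n$ so large that with $\mu^{\otimes n}$-probability exceeding $\tfrac{1}{2}$ the sample $(x_{1},\ldots,x_{n})$ satisfies, for all $f,g \in \mathscr{F}$,
\[
\biggl|\,\|f-g\|_{L^{1}(\mu)}\;-\;\frac{1}{n}\sum_{i=1}^{n}|f(x_i)-g(x_i)|\,\biggr| < \e/2.
\]
Fix any such sample. The evaluation map $\Phi(f)=(f(x_1),\ldots,f(x_n))$ sends $\mathscr{F}$ into the cube $[-M,M]^{n}$, where $M$ is a uniform bound on $\mathscr{F}$. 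Since this cube is totally bounded in the normalized $\ell^{1}$ metric, I would cover it by finitely many balls of radius $\e/4$ and pull back a representative $f_{j}\in\mathscr{F}$ from each nonempty preimage, obtaining $f_{1},\ldots,f_{N}$. Any $f \in \mathscr{F}$ then satisfies $\tfrac{1}{n}\sum_i|f(x_i)-f_j(x_i)| \leq \e/2$ for some $j$, and hence $\|f-f_j\|_{L^{1}(\mu)} < \e$ by the choice of sample. Since $\e > 0$ was arbitrary, $\{f_1,\ldots,f_N\}$ is a finite $\e$-net and $\mathscr{F}$ is totally bounded in $L^{1}(\mu)$, hence precompact.
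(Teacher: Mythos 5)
Your proposal is correct, and while it shares the paper's central combinatorial observation --- that the restriction of $\mathscr F$ to any finite sample lands in a totally bounded cube $[-M,M]^n$, so empirical $L^1(\mu_n)$-distances cannot keep infinitely many functions separated --- it diverges at the crucial bridging step, and in a way that is to your credit. The paper argues by contraposition: an infinite $\e_0$-separated family yields, for every sample, a pair $f_1,f_2$ with $\E_\mu\abs{f_1-f_2}\geq\e_0$ but $\E_{\mu_n}\abs{f_1-f_2}<\e_0/2$, and then claims via ``the triangle inequality in $\R$'' that one of $f_1,f_2$ individually has $\abs{\E_\mu(f_i)-\E_{\mu_n}(f_i)}\geq\e_0/4$. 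That last inference is not valid as stated: one can have $\E_\mu f_i=\E_{\mu_n}f_i$ for both $i$ while the expectations of $\abs{f_1-f_2}$ under $\mu$ and $\mu_n$ differ greatly (take $f_1\equiv 0$ and $f_2=\pm 1$ in balanced proportion off the sample, $0$ on it), because $\abs{\cdot}$ does not commute with expectation. What is really needed is exactly what you isolate: the Glivenko--Cantelli property of $\mathscr F$ must be upgraded to that of $\{\abs{f-g}\colon f,g\in\mathscr F\}$, and for uniformly bounded classes this is done by Gin\'e--Zinn symmetrization plus Talagrand's contraction principle (modulo the usual measurability caveats attached to symmetrization). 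Your second step --- fixing one good sample of positive probability and pulling back an $\e/4$-net of the cube --- is a clean, correct finish. The trade-off is that your route imports heavier machinery than the ``really very simple'' argument the paper intends, but it is the one that actually closes the gap; the paper's proof, read literally, does not.
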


\begin{proof}
If $\mathscr F$ is not precompact, then for some $\e_0>0$ it contains an infinite $\e_0$-discrete subfamily $\mathscr F^\prime$. For every finite sample $\sigma\in\Omega^n$ there is a further infinite subfamily $\mathscr F^{\prime\prime}\subseteq \mathscr F^\prime$ of functions whose restrictions to $\sigma$ are at a pairwise $L^1(\mu_n)$-distance $<\e_0/2$ from each other (the pigeonhole principle coupled with the fact that the restriction of $\mathscr F$ to $\sigma$ is $L^1(\mu_n)$-precompact). This means that $\mu$- and $\mu_n$-expectations of some function of the form $\abs{f_1-f_2}$, $f_i\in{\mathscr F}$, $i=1,2$, differ between themselves by at least $\e_0/2$, and for at least one of $i\in\{1,2\}$,   \[\left\vert\E_{\mu}(f_i)-\E_{\mu_n}(f_i)\right\vert\geq \e_0/4\] (an application of the triangle inequality in $\R$). Since the latter is true for {\em every} sample, no matter the size, $\mathscr F$ is not Glivenko--Cantelli.
\end{proof}

In fact, the same proof works in a slightly more general case when $\mathscr F$ is uniformly bounded by a single function (not necessarily integrable).

This gives an alternative deduction of the implication {\em Glivenko--Cantelli} $\Rightarrow$ {\em PAC learnability}. 
Admittedly, the result obtained is somewhat weaker, as this way we do not get {\em consistent} learnability.

\subsection{Talagrand's witness of irregularity}

Talagrand \cite{talagrand87,talagrand96} had characterized uniform Glivenko--Cantelli function classes with regard to a single distribution
in terms of shattering. We will remind his main result for concept classes only. 
Let $\Omega$ be a measurable space, let $\mathscr C$ be a concept class on $\Omega$, and let $\mu$ be a probability measure on $\Omega$. A measurable subset $A\subseteq\Omega$ is a {\em witness of irregularity} of $\mathscr C$, if $\mu(A)>0$ and for every $n$ the set of all $n$-tuples of elements of $A$ shattered by $\mathscr C$ has full measure in $A^n$. In other words, $\mu$-almost all $n$-tuples of elements of $A$ are shattered by $\mathscr C$.

\begin{theorem}[Talagrand \cite{talagrand87}, Th. 2]
\label{th:talagrand}
A concept class $\mathscr C$ is Glivenko--Cantelli with regard to the probability measure $\mu$ if and only if $\mathscr C$ admits no witness of irregulaity.
\end{theorem}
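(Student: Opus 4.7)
I would prove the theorem as two implications. The first direction, ``witness of irregularity implies not Glivenko--Cantelli,'' is straightforward via symmetrization; the converse is the substantive content and requires Talagrand's extraction argument.

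\emph{First direction.} Suppose $A$ is a witness of irregularity with $\mu(A) = a > 0$. By the symmetrization inequality, $\mathscr C$ is Glivenko--Cantelli with regard to $\mu$ if and only if the Rademacher averages $R_n(\mathscr C) = \E_{x,\epsilon}\bigl[\sup_{c \in \mathscr C}(1/n)\abs{\sum_i \epsilon_i c(x_i)}\bigr]$ tend to zero. Given a sample $x = (x_1,\ldots,x_n)$ and independent Rademacher signs $(\epsilon_i)$, let $I = \{i : x_i \in A\}$. By the witness property applied to the subtuple $(x_i)_{i \in I}$, this tuple is $\mu$-almost surely shattered, and a measurable selection yields $c^\ast = c^\ast(x,\epsilon|_I) \in \mathscr C$ with $c^\ast(x_i) = (1+\epsilon_i)/2$ for each $i \in I$. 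Using $\epsilon_i(1+\epsilon_i)/2 = (1+\epsilon_i)/2$ one finds
\[
  \sum_i \epsilon_i c^\ast(x_i) \;=\; \frac{|I| + \sum_{i \in I}\epsilon_i}{2} \;+\; \sum_{i \notin I}\epsilon_i\, c^\ast(x_i).
\]
The second sum has expectation zero because $c^\ast$ depends only on $x$ and $\epsilon|_I$, so the remaining $\epsilon_i$'s are independent of it; the first averages to $\E[|I|]/2 = an/2$. Hence $R_n(\mathscr C) \geq a/2$ for every $n$, so $\mathscr C$ is not Glivenko--Cantelli.

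\emph{Second direction.} Suppose $\mathscr C$ is not Glivenko--Cantelli, so $R_n(\mathscr C) \geq \delta_0 > 0$ for infinitely many $n$. Sauer--Shelah-type growth-function arguments then imply that a random $n$-sample contains a shattered subset of linear size with probability bounded below in terms of $\delta_0$. The plan is to concentrate this sample-averaged shattering into a single positive-measure set $A$ on which \emph{every} finite tuple is shattered $\mu$-almost surely. Following Talagrand, I would iteratively refine the domain: define descending sets $A_k$ of points $x$ at which the conditional density of shattered extensions to $n$-tuples does not drop below $2^{-k}$ as $n \to \infty$, verify via the quantitative shattering bound that $\mu(A_k)$ stays bounded below, pass to a limit set $A$, and use Fubini together with a tail zero-one argument to upgrade ``positive density'' to ``full measure'' on $A^n$ for every $n$.

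The main obstacle is the converse direction, specifically the measurable-selection and extraction step that turns a sample-averaged, quantitative shattering estimate into a uniform ``every finite tuple shattered $\mu$-a.s.'' statement on a single measurable $A$. This combinatorial/measure-theoretic extraction is the technical heart of \cite{talagrand87}, and I would defer to Talagrand's original argument rather than re-derive it; the existence of $A$ is genuinely the content of the theorem, whereas the first direction is essentially a repackaging of the standard symmetrization lower bound.
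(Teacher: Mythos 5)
The paper does not prove this statement at all: it is imported verbatim as Theorem 2 of Talagrand's 1987 paper and used as a black box, so there is no internal proof to compare yours against. Your first direction is essentially correct and is the standard symmetrization lower bound: conditioning on the index set $I$ of sample points falling in $A$, the subtuple $(x_i)_{i\in I}$ is distributed as an i.i.d.\ sample from the normalized restriction of $\mu$ to $A$, so the witness property does give almost-sure shattering of that subtuple, and the computation $\E\bigl[\sum_i\epsilon_i c^\ast(x_i)\bigr]= an/2$ is right. Two caveats. First, the existence of a jointly measurable selection $c^\ast(x,\epsilon|_I)$ is not automatic for an arbitrary concept class; Talagrand works under explicit measurability hypotheses on $\mathscr C$ (or with outer expectations), and you need to either impose the same or replace the expectation of the supremum by an outer expectation. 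Second, you should justify the desymmetrization step, i.e., why $R_n(\mathscr C)\not\to 0$ implies failure of the uniform convergence condition \eqref{eq:glivenko}; for $\{0,1\}$-valued classes this is standard, but it is exactly where the equivalence you invoke lives, and it is not a tautology.

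The genuine gap is the converse. What you have written for the second direction is a plan, not a proof: the passage from ``a positive fraction of random $n$-samples contain linearly many shattered points'' to ``a single measurable set $A$ of positive measure, every finite tuple of which is $\mu$-a.s.\ shattered'' is precisely the content of Talagrand's extraction argument. Your sketch (descending sets $A_k$, a limit set, Fubini plus a zero--one argument) names the ingredients without establishing any of the estimates; in particular, the assertion that $\mu(A_k)$ stays bounded below is exactly what has to be proved and is where all the work is, and the upgrade from positive density of shattered tuples to full measure in $A^n$ for every $n$ is not a routine consequence of Fubini. Deferring to \cite{talagrand87} is legitimate in context, since the present paper also uses the theorem as a citation, but you should then present the converse as a quoted result rather than as something your argument establishes.
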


Let $\mu$ be a probability measure on $\Omega$. Recall that a set $A$ is an {\em atom} if for every measurable $B\subseteq A$ one has either $\mu(B)=0$ or $\mu(B)=\mu(A)$. The measure $\mu$ is {\em non-atomic} if it contains no atoms, and {\em purely atomic} if the measures of atoms add up to one. The restriction of $\mu$ to the union of atoms is the {\em atomic part} of $\mu$.

Since a witness of irregularity can contain no atoms, the following is an immediate corollary of Talagrand's 1987 result.

\begin{corollary}
\label{c:pa}
If a measure $\mu$ is purely atomic, then every concept class $\mathscr C$ is uniform Glivenko--Cantelli with regard to $\mu$, and in particular PAC learnable.
\end{corollary}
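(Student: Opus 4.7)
The plan is to read the statement off Talagrand's characterisation (Theorem \ref{th:talagrand}): combined with the Observation above and Theorem \ref{th:suf}, this reduces the corollary to verifying the remark preceding the statement, namely that no witness of irregularity can sit inside a purely atomic measure.

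Concretely, assume for contradiction that $A \subseteq \Omega$ were a witness of irregularity of $\mathscr C$. On a standard Borel space a purely atomic probability measure decomposes as $\mu = \sum_i p_i \delta_{x_i}$ with $p_i > 0$, so $\mu(A) > 0$ forces $A$ to contain some point $x^\ast$ of positive mass $p := \mu(\{x^\ast\}) > 0$. For each $n \geq 2$ the slice
\[S_n := \{x^\ast\} \times \{x^\ast\} \times A^{n-2} \subseteq A^n\]
has product measure $\mu^{\otimes n}(S_n) \geq p^{2}\,\mu(A)^{n-2} > 0$, yet none of its elements is shattered by $\mathscr C$: its first two coordinates coincide, so every $f \in \mathscr C$ takes the same value on them and the labelling $(0,1,\ldots)$ cannot be realised. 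A positive-measure family of unshattered tuples contradicts the definition of a witness, so $A$ cannot exist. Talagrand's theorem then delivers the Glivenko--Cantelli property of $\mathscr C$ with respect to $\mu$, and PAC learnability follows.

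There is no real obstacle. The only ingredient beyond Talagrand's theorem is the standard fact that a purely atomic probability measure on a standard Borel space is a countable sum of Dirac masses --- equivalently, that every atom carries a singleton of full atom mass; the remainder is straightforward bookkeeping with the product measure. If one wished to avoid even this small piece of measure theory, the same argument goes through for any atom $B \subseteq A$ in the abstract sense, using that the diagonal of $B \times B$ carries positive $\mu^{\otimes 2}$-measure, a fact that is itself an easy consequence of the atom property applied to the vertical sections of the diagonal.
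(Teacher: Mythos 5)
Your proposal is correct and follows the same route as the paper, which deduces the corollary from Talagrand's characterisation (Theorem \ref{th:talagrand}) via the one-line remark that a witness of irregularity can contain no atoms; you have simply made that remark explicit by exhibiting a positive-measure set of unshattered tuples (those with a repeated atom in two coordinates). The only difference is level of detail, not substance.
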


The corollary is easy to prove directly, without using subtle results of Talagrand, and the result was observed (independently) in 1991 and investigated in detail by Benedek and Itai (\cite{BI}, Theorem 3.2). Notice that the result does not assert {\em polynomial} PAC learnability of $\mathscr C$, and we will see shortly that the required sample complexity of $\mathscr C$ can grow arbitrarily fast.

\subsection{The neural network of Sontag}

\begin{figure}[!t]
\begin{center}
\scalebox{0.3}[0.3]{\includegraphics{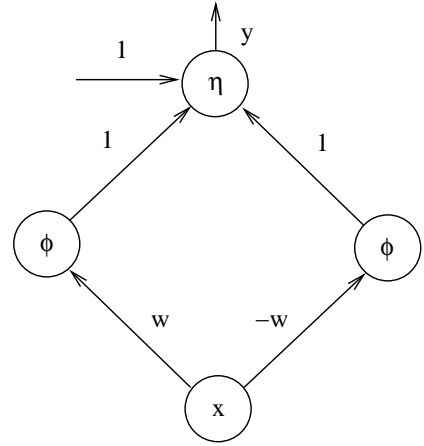}}
\scalebox{0.55}[0.55]{\includegraphics{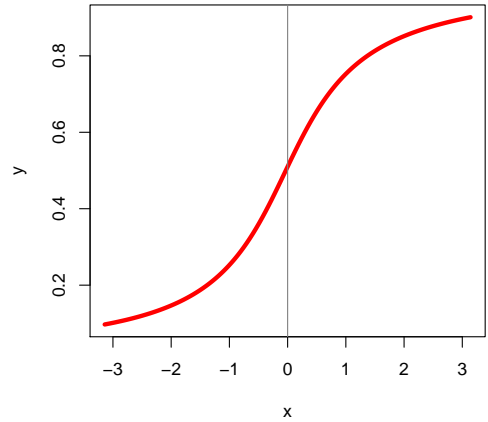}}
\end{center}
\caption{Sontag's ANN architecture (top) and the activation sigmoid $\phi$ with $\alpha=100$ (bottom).}
\label{fig:sontag}
\end{figure}


Figure \ref{fig:sontag} recalls a well-known example of a sigmoidal neural network $\mathcal N$ constructed by Sontag \cite{sontag}, pp. 34--36. (Cf. also \cite{vidyasagar}, page 389, where the top diagram in Figure \ref{fig:sontag} is borrowed from.) 
The activation sigmoid is of the form 
\[\phi(x) =\frac 1\pi\tan^{-1}x+\frac{\cos x}{\alpha(1+x^2)}+\frac 12,\]
where $\alpha\geq 2\pi$ is fixed, e.g. $\alpha=100$. 
and the output-layer perceptron has both input weights equal to one and a threshold of one. The input-output function of the network is given by
\[y = \eta[\rho(x)],\]
where
\[\rho(x) = \frac{2\cos wx}{\alpha(1+w^2x^2)}.\]
The input space of $\mathcal N$ is the space $\R$ of real numbers.

\begin{figure}[!t]
\begin{center}
\scalebox{0.55}[0.55]{\includegraphics{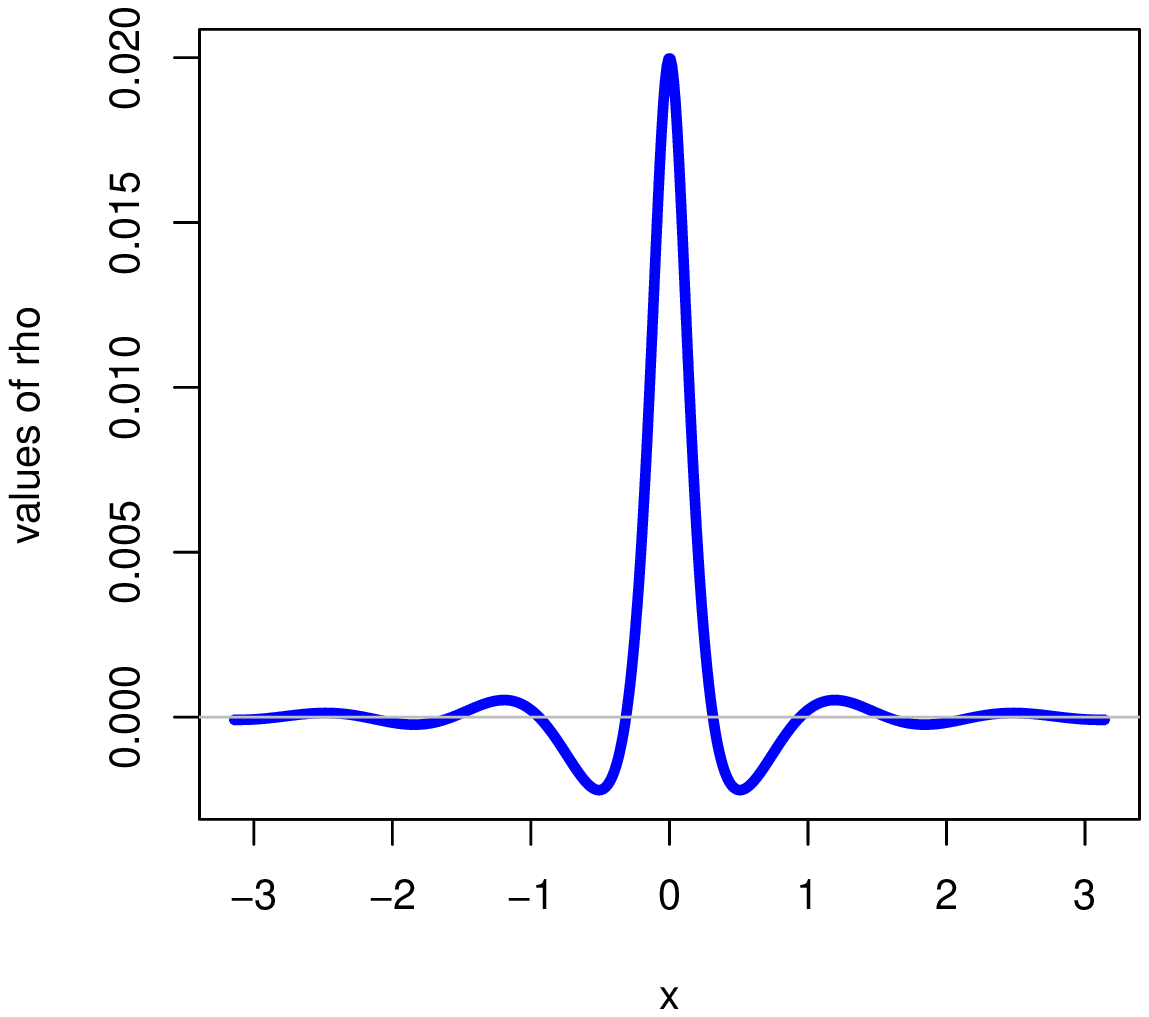}}
\scalebox{0.55}[0.55]{\includegraphics{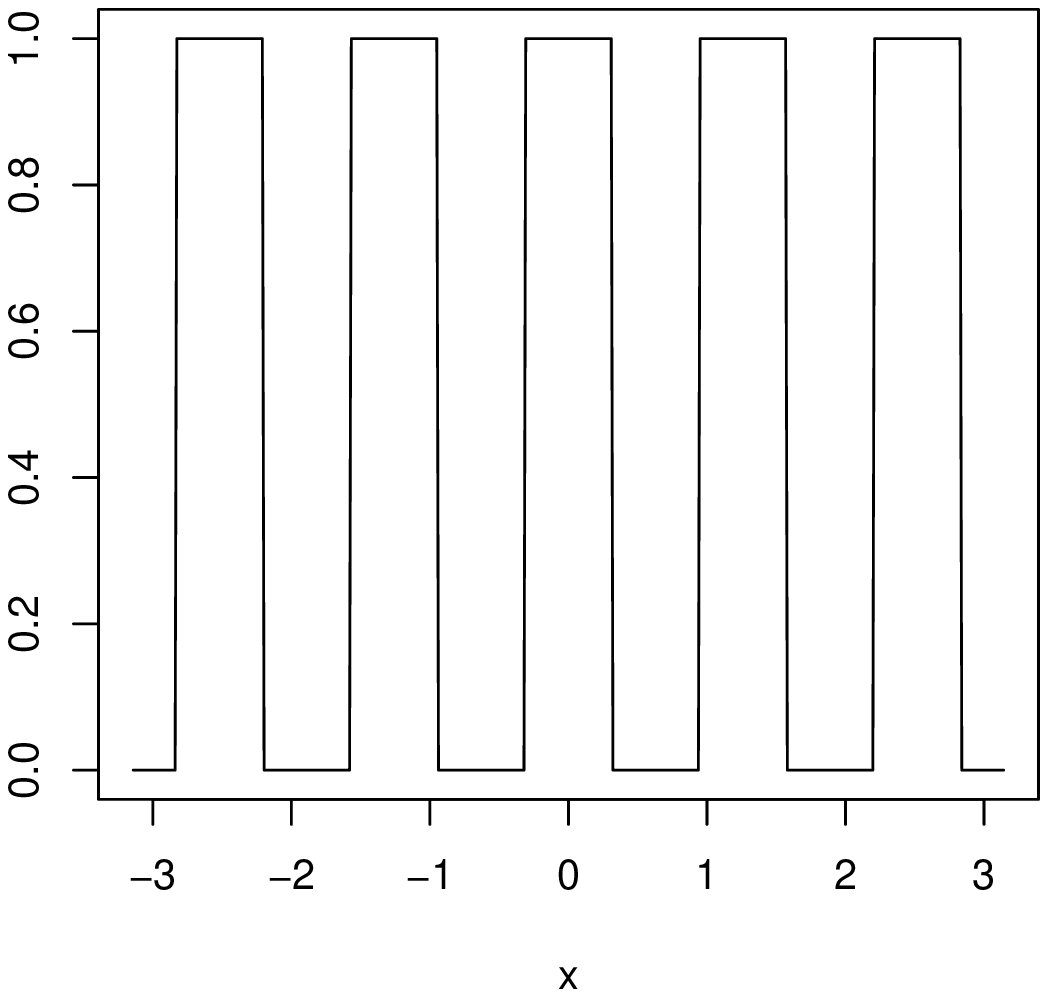}}
\end{center}
\caption{The function $\rho$ for $\alpha=100$ and $w=5$ (top) and the
corresponding output binary function (bottom).}
\label{fig:rho}
\end{figure}

%

Recall that a collection $x_1,x_2,\ldots,x_n$ of real numbers is {\em rationally independent} if no non-trivial linear combination of $1,x_1,x_2,\ldots,x_n$ with rational coefficients vanishes.

\begin{theorem}[\cite{sontag}, pp. 42-43]
\label{th:ratind}
The Sontag network $\mathcal N$ shatters every rationally independent $n$-tuple of real inputs $x_1,x_2,\ldots,x_n$. 
\end{theorem}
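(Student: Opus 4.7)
The plan is to reduce the shattering statement to the Kronecker--Weyl density theorem for linear flows on a torus. The first preparatory step is to write the input-output map of $\mathcal N$ in a usable form. Exploiting that $\tan^{-1}$ is odd and $\cos$ is even, one gets
\[
\phi(wx) + \phi(-wx) \;=\; 1 + \frac{2\cos wx}{\alpha(1+w^2x^2)} \;=\; 1+\rho(x).
\]
Since the output-layer perceptron has both input weights equal to one and threshold one, it fires precisely when $\phi(wx)+\phi(-wx)\geq 1$, i.e.\ exactly when $\rho(x)\geq 0$, i.e.\ exactly when $\cos wx\geq 0$. So the network implements $x\mapsto \mathbf{1}_{\{\cos wx\geq 0\}}$, and the only free parameter is the weight $w\in\R$.

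Next, fix a labelling $\varepsilon=(\varepsilon_1,\ldots,\varepsilon_n)\in\{0,1\}^n$ and pick target phases $\theta_i\in\{0,\pi\}$ with $\theta_i=0$ when $\varepsilon_i=1$ and $\theta_i=\pi$ when $\varepsilon_i=0$. It is enough to find $w\in\R$ such that $wx_i\bmod 2\pi$ lies within $\pi/2$ of $\theta_i$ (strictly), for then $\operatorname{sign}\cos wx_i=\operatorname{sign}\cos\theta_i$ and the output at $x_i$ is $\varepsilon_i$.

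The core step is to invoke the Kronecker--Weyl theorem: the image of the one-parameter subgroup
\[
L\;=\;\bigl\{(wx_1,\ldots,wx_n)\bmod 2\pi\;:\;w\in\R\bigr\}
\]
is dense in the torus $\T^n=(\R/2\pi\Z)^n$. This is standard Pontryagin duality: any character of $\T^n$ trivial on $L$ has the form $(t_1,\ldots,t_n)\mapsto\exp\bigl(i\sum k_j t_j\bigr)$ with $\sum k_j x_j=0$, and the rational independence hypothesis forces $k=0$. Hence $\overline L=\T^n$, and one can choose $w$ so that each $wx_i\bmod 2\pi$ is as close to $\theta_i$ as desired, yielding the shattering.

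I expect the only place requiring mild care is the first step, namely confirming the input-output identity for Sontag's architecture and checking that the threshold can be arranged strictly (which is automatic since the approximation is open, so $\cos wx_i=0$ is avoided). After that, the core argument is a direct application of the density of an irrational line on a torus, and no further numerical estimates are needed.
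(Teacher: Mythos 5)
Your proof is correct and is essentially the argument of Sontag that the paper merely cites (the note itself states the theorem with a reference and gives no proof): reduce the network's output to the sign of $\cos wx_i$ via the identity $\phi(wx)+\phi(-wx)=1+\rho(x)$, then apply the Kronecker--Weyl density of the line $w\mapsto (wx_1,\ldots,wx_n)$ in $\T^n$, which the rational independence hypothesis guarantees through the character computation you give. The only points needing care --- strictness at the threshold and the exclusion of $x_i=0$ (which rational independence of $1,x_1,\ldots,x_n$ already rules out) --- are handled, so there is no gap.
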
 

In particular, the VC dimension of Sontag's network is infinite. Besides,
it is easy to find an infinite rationally independent set, and so every finite subset of such a set is shattered by $\mathcal N$. We will need this fact later.

Here is another extreme property of Sontag's network.

\begin{theorem}
\label{th:sontaggc}
The neural network of Sontag $\mathcal N$ is Glivenko--Cantelli under a probability distribution $\mu$ on the inputs if and only if $\mu$ is purely atomic.
\end{theorem}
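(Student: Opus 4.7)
The plan is to reduce both implications to Talagrand's criterion (Theorem~\ref{th:talagrand}) together with Sontag's shattering result (Theorem~\ref{th:ratind}).

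The ``if'' direction will be immediate from Corollary~\ref{c:pa}: if $\mu$ is purely atomic then every concept class, in particular $\mathcal{N}$, is Glivenko--Cantelli under $\mu$, with no feature of Sontag's network being used.

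For the ``only if'' direction I would argue the contrapositive. Suppose $\mu$ has a non-atomic part, and take a Borel set $A \subseteq \R$ with $\mu(A) > 0$ on which $\mu$ is non-atomic; such an $A$ is produced by the standard decomposition of a Borel probability measure on $\R$ into atomic and non-atomic parts, and in particular $\mu(\{x\}) = 0$ for every $x \in A$. I will verify that $A$ is a witness of irregularity, which via Theorem~\ref{th:talagrand} forces $\mathcal{N}$ to fail the Glivenko--Cantelli property. By Theorem~\ref{th:ratind} every rationally independent tuple is shattered by $\mathcal{N}$, so it suffices to show that $\mu^{\otimes n}$-almost every $n$-tuple in $A^n$ is rationally independent.

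The set of rationally dependent tuples in $A^n$ is the countable union, over nonzero $(q_0, q_1, \ldots, q_n) \in \Q^{n+1}$, of the affine hyperplanes $\{q_0 + q_1 x_1 + \cdots + q_n x_n = 0\}$. When some $q_i$ with $i \geq 1$ is nonzero, Fubini's theorem writes the $\mu^{\otimes n}$-measure of the corresponding hyperplane as an iterated integral whose innermost integrand is the $\mu$-measure of a single point in $A$, which vanishes by non-atomicity; the remaining case (all $q_i=0$ for $i\geq 1$) contributes only an empty set. A countable union of null sets is null, so the desired conclusion follows. The only non-formal step is producing the set $A$ on which $\mu$ is non-atomic, which is a standard measure-theoretic exercise; once $A$ is in hand, the argument is a direct assembly of Talagrand's theorem, Sontag's rational-independence shattering, and Fubini.
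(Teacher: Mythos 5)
Your proposal is correct, and its overall architecture coincides with the paper's: the ``if'' direction via Corollary~\ref{c:pa}, and the ``only if'' direction by exhibiting the non-atomic part of $\mu$ as a witness of irregularity, which reduces (via Theorems~\ref{th:talagrand} and~\ref{th:ratind}) to showing that each rational affine hyperplane $\{q_0+q_1x_1+\cdots+q_nx_n=0\}$ is $\mu^{\otimes n}$-null. Where you genuinely diverge is in how that null-set claim is established. The paper invokes Eggleston's theorem: a positive-measure subset of a product of non-atomic standard Borel spaces must contain a product of a positive-measure set with perfect sets, and no hyperplane can contain such a product. You instead integrate out a coordinate $x_i$ with $q_i\neq 0$ via Fubini, observing that each section of the hyperplane in that coordinate is a single point, which is $\mu$-null on $A$ by non-atomicity. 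Your route is more elementary and self-contained --- it uses only Fubini and the product structure of $\mu^{\otimes n}$, exploiting that the defining equation is affine in each coordinate separately, and it avoids the external citation entirely. The paper's Eggleston argument is heavier machinery but establishes a more robust structural fact (positive-measure sets contain products with perfect factors) that does not depend on the linear form of the exceptional set; for the specific hyperplanes at hand, your Fubini computation suffices and is the cleaner proof. All the small points check out: the degenerate case $q_1=\cdots=q_n=0$, $q_0\neq 0$ gives the empty set, the countable union over $\Q^{n+1}\setminus\{0\}$ preserves nullity, and rational independence of almost every tuple in $A^n$ (which in particular forces distinct entries) delivers shattering by Theorem~\ref{th:ratind}.
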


\begin{IEEEproof}
Sufficiency $(\Leftarrow)$ follows from Corollary \ref{c:pa}. Let us prove necessity $(\Rightarrow)$. 
By splitting $\mu$ into a purely atomic part $\mu_{a}$ and a continuous part $\mu_{c}$, 
in view of Theorems \ref{th:talagrand} of Talagrand and \ref{th:ratind} of Sontag, it suffices to prove that for every non-atomic probability measure $\nu$ on $\R$ the set of rationally independent $n$-tuples has a full $\nu^{\otimes n}$ measure in $\R^n$: the support of $\mu_c$ will then be a witness of irregularity. In its turn, this reduces to a proof that for a fixed collection $(\lambda_1,\ldots,\lambda_{n+1})$ of rationals not all of which are zero, the affine hyperplane
\[H_{\lambda} = \{x\in\R^n\colon \langle x,\lambda\rangle=\lambda_{n+1}\},\]
where $\lambda =(\lambda_1,\ldots,\lambda_{n})$,
has $\nu^{\otimes n}$-measure null. This is a consequence of Eggleston's theorem \cite{eggleston}: If $A$ is a measurable,
Lebesgue-positive subset of the unit square, then there is a measurable positive set $B$ and a perfect set
$C$ such that $B\times C$ is included in $A$. ``Lebesgue measure on the unit square'' here is not a loss of generality, as every two non-atomic standard Borel probability measure spaces are isomorphic, and we obtain by induction that if $A\subseteq \R^n$ and $\nu^{\otimes n}(A)>0$, then $A$ contains a product of $n$ sets one of which is $\nu^{\otimes n}$-measure positive and all the rest are perfect (contain no isolated points). Clearly, no $(n-1)$-hyperplane in $\R^n$ can have this property.
\end{IEEEproof}

\begin{example}
Sontag's ANN is not PAC learnable under the uniform distribution on an interval.

Indeed, for the sequence of learning parameters $w_k=2^k$ the corresponding output binary functions are at a pairwise $L^1(\lambda)$-distance $1/2$ from each other, where $\lambda$ is a uniform distribution on some interval. 

A similar argument works for the gaussian distribution on the inputs.
\end{example}

However, we do not know if there exists a non-atomic measure under which Sontag's ANN is PAC learnable.

\subsection{Glivenko--Cantelli versus learnability}

Not every PAC learnable function, or even concept, class is Glivenko--Cantelli. Examples of such concept classes exist trivially, e.g. the concept class consisting of all finite and all cofinite subsets of the unit intervals is PAC learnable under every non-atomic distribution, yet clearly not uniform Glivenko--Cantelli, cf. \cite{BI}, p. 385, note (2), or \cite{vidyasagar}, p. 230, Example 6.4. A more interesting example, though based on the same idea, is Example 6.6 in \cite{vidyasagar}, p. 232. Here we present such an example of a countable concept class.

\begin{example}
For $n\in\N$, say that intervals $[i/n,(i+1)/n]$, $i=0,1,\ldots,n-1$, are of order $n$. 
Let $\mathscr C_n$ consist of all unions of less than $\sqrt n$ intervals of order $n$, and set ${\mathscr C}=\cup_{i=1}^\infty {\mathscr C}_n$.
If now $k\in\N$ is any and $x_1<x_2<\ldots<x_k$ are points of the unit interval, choose $n>k^2$ so that $1/n$ is smaller than any of the half-distances between neighbouring points $(x_{i+1}-x_i)/2$, $i=1,2,\ldots,n$. Clearly, elements of ${\mathscr C}_n$ shatter the sample $\{x_1,x_2,\ldots,x_k\}$, and so the entire interval is a witness of irregularity for the concept class $\mathscr C$. 
By Talagrand's result, the class $\mathscr C$ is not Glivenko--Cantelli. At the same time, for every $n$, ${\mathscr C}_n$ forms an $n^{-1/2}$-net for $\mathscr C$ with regard to the $L^1(\lambda)$-distance, and so $\mathscr C$ is PAC learnable under the Lebesgue measure $\lambda$ (the uniform measure on the interval). 

Observe that, in fact, $\mathscr C$ fails the Glivenko--Cantelli property with regard to {\em every} measure having a non-atomic part. As we have seen, there exist non-atomic measures under which $\mathscr C$ is PAC learnable. There are also measures under which $\mathscr C$ is not PAC learnable. for example the Haar measure $\nu$ on the Cantor set.

Recall the construction of the Cantor ``middle third'' set $\mathcal C$ (Figure \ref{fig:cantor}). This is the set left of the closed unit interval $[0,1]$ after first deleting the middle third $(1/3,2/3)$, then deleting the middle thirds of the two remaining intervals, $(1/9,2/9)$ and $(7/9,8/9)$, and continuing to delete the middle thirds {\em ad infimum}. The elements of the Cantor set are exactly those real numbers between $0$ and $1$ admitting a ternary expansion not containing $1$. Sometimes $\mathcal C$ is called {\em Cantor dust}. The complement to the Cantor set is a union of countably many open intervals, all the middle thirds left out.
The set $\mathcal C_n$ left after the first $n$ steps of removing the middle thirds is the union of $2^n$ closed intervals of equal length $3^{-n}$ each. The Haar measure of every such interval is set to be equal to $2^{-n}$, and this condition defines a non-atomic measure $\nu$ supported on $\mathcal C$ in a unique way. 

\begin{figure}[!t]
\begin{center}
\scalebox{0.275}[0.275]{\includegraphics{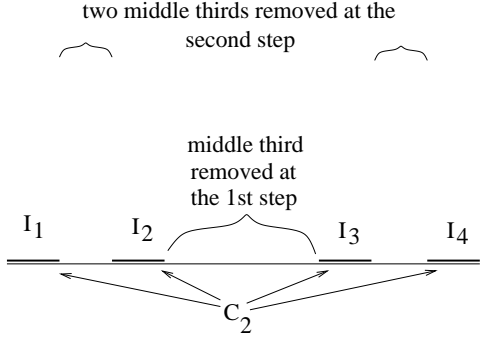}}
\end{center}
\caption{Construction of the Cantor set, after $n=2$ steps.}
\label{fig:cantor}
\end{figure}

It is easy to see now that the closed intervals $I_1,I_2,\ldots,I_{2^n}$ at the level $n$ are shattered with concept classes from ${\mathscr C}_N$ if $N$ is large enough ($\geq 2^{2n}$), in the following sense: for every set of indices $J\subseteq \{1,2,\ldots,2^n\}$ there is a $C\in {\mathscr C}_N$ which contains every interval $I_j$, $j\in J$, and is disjoint from every interval $I_k$, where $k\notin J$. Now one can modify the proof of Lemma \ref{l:2e} exactly as it was done in \cite{pestov}, proof of Theorem 3, in order to conclude that $\mathscr C$ is not totally bounded in the $L^1(\nu)$-distance. 
\end{example}

\section{All rates of sample complexity are possible}

\begin{theorem}
\label{th:main}
Let $\mathscr C$ be a concept class which shatters every finite subset of some infinite set. Let $(\e_k)$, $\e_k\downarrow 0$ be a sequence of positive reals converging to zero, and let $f\colon\R_+\to\R_+$ be a non-decreasing function growing at least linearly: $f(x) = \Omega(x)$. Then there is a probability measure $\mu = \mu((\e_k),f)$ on the input domain $\Omega$ with the property that for every $\delta>0$ and $k\in\N$ the class $\mathscr C$ is PAC learnable under the distribution $\mu$ to accuracy $\e_k$, and the rate of required sample complexity is at least 
\begin{equation}
\label{eq:sample}
n(\e_k,\delta)=\Omega\left(f\left(\frac{1}{\e_k}\right) \right).\end{equation}
Moreover, the above estimate is essentially tight in the sense that the sample complexity 
\begin{equation}
\label{eq:sample3}
n(\e_k,\delta)=O\left(f\left(\frac{1}{\e_k}\right)+
\log\left(\frac{1}{\delta}\right)\right).\end{equation}
suffices to learn $\mathscr C$ to accuracy $4\e_k$ with confidence $1-\delta$.
\end{theorem}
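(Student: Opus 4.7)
The plan is to build $\mu$ as a purely atomic probability measure supported on a sequence of pairwise disjoint finite \emph{blocks} $B_1, B_2, \ldots$ inside the infinite shattered subset, with block parameters calibrated so that for every $k$ one particular block simultaneously witnesses a large $2\e_k$-packing and a small enough cover. The dyadic choice $m_j := 2^{-j}$ for the total mass of $B_j$ (spread uniformly among its $n_j$ atoms, per-atom mass $p_j = 2^{-j}/n_j$) handles summability independently of the decay rate of $(\e_k)$; for each $k$ one matches $\e_k$ to the block $j(k) := \lfloor \log_2(1/(16\e_k)) \rfloor$, so that $m_{j(k)} \in [16\e_k, 32\e_k)$ and in particular $2\e_k/m_{j(k)} \in (1/16, 1/8]$; and the block sizes are $n_j := \lceil C \sup\{ f(1/\e_k) : j(k) = j \} \rceil$ for an absolute constant $C$ (with $n_j := 1$ when the set is empty), ensuring $n_{j(k)} \geq C f(1/\e_k)$ for every $k$.

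For the lower bound \eqref{eq:sample}, fix $k$ and restrict to $B_{j(k)}$. Shattering makes $\mathscr{C} \to \{0,1\}^{B_{j(k)}}$ surjective, and on its image the $L^1(\mu)$-distance equals $p_{j(k)}$ times the Hamming distance. Lemma~\ref{l:2e} applied with separation parameter $2\e' = 2\e_k/m_{j(k)} \leq 1/8$ yields $\exp\!\bigl[2(1/2 - 2\e_k/m_{j(k)})^2 n_{j(k)}\bigr] = 2^{\Omega(n_{j(k)})} = 2^{\Omega(f(1/\e_k))}$ labellings of $B_{j(k)}$ pairwise $2\e_k$-separated in $L^1(\mu)$; lifting them to concepts in $\mathscr{C}$ (again by shattering) and invoking Theorem~\ref{th:nec} gives \eqref{eq:sample}.

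For the upper bound \eqref{eq:sample3}, fix $k$ and take a cutoff $J_k \approx \log_2(1/\e_k)$ so that $\sum_{j > J_k} m_j \leq 2\e_k$. For each of the $2^{\sum_{j \leq J_k} n_j}$ binary labellings of the head $\bigcup_{j \leq J_k} B_j$, pick (by shattering) a representative concept in $\mathscr{C}$; tail differences contribute at most $2\e_k$ to $L^1(\mu)$, so this is a $2\e_k$-cover of $\mathscr{C}$. Feeding the cover size into Theorem~\ref{th:suf} at accuracy $4\e_k$ produces the advertised bound, the key point being that the geometric mass scheme makes the head sum dominated by its top term $n_{J_k} = \Theta(f(1/\e_k))$.

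The main obstacle is the three-way calibration of $(m_j, n_j, J_k)$: (i) $\sum_j m_j = 1$; (ii) each $m_{j(k)}$ is large enough for Lemma~\ref{l:2e} to bite at scale $2\e_k$; and (iii) the head sum is dominated by its top term, so that the upper bound matches the lower one. A naive ``one block per $\e_k$'' construction with $m_k \asymp \e_k$ fails (i) whenever $(\e_k)$ is non-summable, and substituting a summable majorant breaks (ii) by losing the required Hamming-scale margin. The dyadic scheme $m_j = 2^{-j}$ decouples block mass from $(\e_k)$ and resolves (i)--(iii) simultaneously, with the assignment $k \mapsto j(k)$ automatically grouping similar $\e_k$ values into the same block.
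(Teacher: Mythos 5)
Your lower-bound argument is sound, and is in fact a cleaner variant of the paper's: instead of taking a product of separated families over many blocks, you extract the entire $2\e_k$-separated family from the single block $B_{j(k)}$, and the calibration $m_{j(k)}\in[16\e_k,32\e_k)$ does make Lemma \ref{l:2e} applicable with separation parameter $2\e_k/m_{j(k)}\le 1/8$, so Theorem \ref{th:nec} yields \eqref{eq:sample}. The construction is well defined (the suprema defining $n_j$ are finite because $f$ is non-decreasing and the $\e_k$ with $j(k)=j$ are bounded below), and learnability itself follows since $\mu$ is purely atomic.

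The gap is in the upper bound, at the sentence claiming that ``the geometric mass scheme makes the head sum dominated by its top term $n_{J_k}=\Theta(f(1/\e_k))$.'' Neither half of that claim holds. First, $n_{J_k}$ is not $\Theta(f(1/\e_k))$: to get tail mass $\le 2\e_k$ you must take $J_k\ge\log_2(1/(2\e_k))$, i.e.\ $J_k\ge j(k)+3$, and the blocks $j(k)+1,\dots,J_k$ have sizes $n_j=\lceil C\sup\{f(1/\e_l):j(l)=j\}\rceil$ where the relevant $\e_l$ lie in roughly $(\e_k/16,\e_k/2)$; hence the head contains terms of order $f(c/\e_k)$ with $c$ as large as about $16$. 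For the rapidly growing $f$ the theorem is really about (superexponential, non-recursive), $f(16/\e_k)$ dwarfs $f(1/\e_k)$, so the cover size $2^{\sum_{j\le J_k}n_j}$ does not give \eqref{eq:sample3}. Second, even setting that aside, $\sum_{j\le J_k}n_j$ need not be $O(\max_{j\le J_k} n_j)$: $f$ is only non-decreasing and $\Omega(x)$, so it may have long plateaus, making the head sum as large as $\Theta(J_k\cdot\max_j n_j)$ --- the geometric decay is in the masses $m_j$, not in the sizes $n_j$. This tension is intrinsic to your dyadic decoupling: the packing needs the distinguished block to have mass at least about $8\e_k$, while the cover must descend to tail mass at most about $2\e_k$, and the intervening blocks are sized by $f$ at a strictly finer scale. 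The paper avoids both problems by tying everything to the sequence itself: block $F_k$ gets mass $m_k\propto\e_k-\e_{k+1}$ (which telescopes to a probability measure for \emph{any} $\e_k\downarrow 0$, so the summability worry that motivated your dyadic scheme does not actually arise) and size $\abs{F_k}=f(1/\e_k)-f(1/\e_{k-1})$, so that the first $k$ blocks form an $O(\e_k)$-net of cardinality exactly $2^{f(1/\e_k)}$: the head sum \emph{is} the top term by construction. Repairing your argument essentially forces the same telescoping of block sizes and an alignment of the covering cutoff with the packing block, at which point the dyadic masses no longer buy anything.
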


\begin{IEEEproof}
We can assume without loss in generality that $\e_1 = 1/5$. For every $k$, set $m_k = 5(\e_{k+1}-\e_k)$. Then $m_k$ form a sequence of non-negative reals which sums up to one. Denote, for simplicity, $f_k=f(\e_k^{-1})$.
Further, choose pairwise disjoint finite sets $F_k$ of cardinality $\abs{F_k}=f_{k}-f_{k-1}$ (where $f_0=0$) in a way that every union of finitely many of $F_k$'s is shattered by $\mathscr C$ (this is possible due to the assumption on the class $\mathscr C$). Let $\mu_k$ denote a uniform measure supported on $F_k$ of total mass $m_k$. Now set $\mu=\sum_{i=1}^\infty \mu_k$. Since $\sum_{i=1}^\infty m_k=1$, $\mu$ is a probability Borel measure. 

Let $k$ be arbitrary. Select any subset of $\mathscr C$ shattering $\cup_{i=1}^k F_i$ and containing 
\[\prod_{i=1}^k \abs{F_i}= 2^{f_k}\]
elements. This set forms a finite $\e_k$-net in $\mathscr C$ with regard to the $L^1(\mu)$-distance. Since $\e_k\downarrow 0$, we use Theorem \ref{th:suf} to conclude: the class $\mathscr C$ is PAC learnable under $\mu$, and the sample complexity of learning $\mathscr C$ to accuracy $\e_k$ and confidence $1-\delta$, $\delta>0$ is
\[m\geq \frac{8}{\e^2}\log\frac{2^{f_k}}{\delta} = \frac{8}{\e_k^2}\left(f_k +\log(\delta^{-1}) \right).\]

For every $k$, Lemma \ref{l:2e}, applied with $\e=0.2$, guarantees the existence of a subset $\Phi_k$ of $\mathscr C$ every two elements of which are at a 
$L^1(\mu_i)$-distance $\geq 0.42m_i$ from each other, and containing $\geq \exp[0.0128(f_k-f_{k-1})]$ elements. Let $N$ be so large that $\sum_{k=1}^Nm_k\geq (1.05)^{-1}$. Fix $k$. 
Since $\cup_{k=1}^N F_k$ is shattered by $\mathscr C$, one can find elements of $\mathscr C$ which correspond to elements of the product $\prod_{i=k}^N \Phi_i$, and every two of which are at a distance $\geq 0.42\sum_{k=1}^N m_k\e_k\geq 0.4\e_k$ from each other. According to Theorem \ref{th:nec}, this means that the computational complexity of learning $\mathscr C$ under $\mu$ to accuracy $\e_k$ with confidence $1-\delta$ is at least $0.0128 f_k$ samples.
\end{IEEEproof}

\begin{remark}
The measure $\mu$ constructed in the proof is purely atomic. However,
by replacing the domain $\Omega$ with $\Omega\times [0,1]$, every concept $C\in {\mathscr C}$ with $C\times [0,1]$, and $\mu$ with the product $\mu\otimes\lambda$, where $\lambda$ is the uniform (Lebesgue) measure on the interval, one can ``translate'' every example as above into an example of learning under a non-atomic probability distribution.
\end{remark}

\begin{corollary}
\label{c:mu}
Let $\nu$ be a probability distribution on a domain $\Omega$ having infinite support. Then there exist concept classes $\mathscr C$ which are PAC learnable under $\nu$ and whose required sample complexity is arbitrarily high.
\end{corollary}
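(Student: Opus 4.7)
The plan is to reduce directly to Theorem \ref{th:main} by building, for the given measure $\nu$, a concept class whose $L^1(\nu)$-structure mirrors the $L^1(\mu)$-structure of the atomic example used in that theorem's proof. Fix any sequence $\e_k\downarrow 0$ and any non-decreasing $f\colon\R_+\to\R_+$ growing at least linearly, and retain the notation $m_k$, $f_k=f(\e_k^{-1})$, $n_k=f_k-f_{k-1}$ from the proof of Theorem \ref{th:main}. The goal is to find pairwise disjoint measurable sets $\{B_{k,i}:k\geq 1,\,1\leq i\leq n_k\}$ in $\Omega$ with $\nu(B_{k,i})$ proportional, up to a multiplicative constant depending only on $\nu$, to $m_k/n_k$, and then to declare $\mathscr C$ to consist of all unions $\bigcup_{(k,i)\in S}B_{k,i}$ over arbitrary index subsets $S$.

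Once such blocks are in hand, I expect the argument of Theorem \ref{th:main} to transfer essentially verbatim. The $2^{f_k}$ elements of $\mathscr C$ corresponding to the $2^{f_k}$ subsets of $\{(j,i):j\leq k,\,1\leq i\leq n_j\}$ form an $\e_k$-net for $\mathscr C$ in the $L^1(\nu)$-distance (after rescaling by the constant of proportionality), so Theorem \ref{th:suf} gives an upper bound of the form (\ref{eq:sample3}). The packing argument via Lemma \ref{l:2e}, applied inside each block collection and combined across $k$ as in the theorem, then yields the matching $\Omega(f_k)$ lower bound via Theorem \ref{th:nec}.

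The only step demanding real work is the construction of the blocks $B_{k,i}$, which I split on the Lebesgue decomposition $\nu=\nu_a+\nu_c$. If $\nu_c(\Omega)>0$, non-atomicity on $\mathrm{supp}(\nu_c)$ lets me partition it into measurable pieces of any prescribed positive $\nu$-measures summing to $\nu_c(\Omega)$, so after renormalising $m_k$ by $\nu_c(\Omega)$ the target masses are realised exactly. If instead $\nu$ is purely atomic, the infinite-support hypothesis forces infinitely many atoms whose masses tend to zero, and a first-fit-decreasing greedy grouping of those atoms into finite blocks gives $B_{k,i}$ whose $\nu$-masses match $m_k/n_k$ up to a bounded multiplicative factor. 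This purely atomic case is where the main (if minor) obstacle lies: one must order atoms and choose thresholds carefully to avoid overshooting, but since only finitely many atoms exceed any given mass threshold the greedy scheme succeeds, and since (\ref{eq:sample}) and (\ref{eq:sample3}) are asymptotic bounds, the resulting constant multiplicative distortion is absorbed.
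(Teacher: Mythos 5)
Your overall strategy --- transporting the atomic construction of Theorem \ref{th:main} onto $(\Omega,\nu)$ by grouping the mass of $\nu$ into disjoint blocks $B_{k,i}$ of prescribed approximate sizes and taking all unions of blocks --- is essentially the reduction the paper itself performs (there it is phrased as a measure-preserving map $\phi$ with $\nu\phi^{-1}=\mu$, followed by pulling back the concept class), and your Case 1 is fine: a non-atomic part can indeed be partitioned into pieces of any prescribed masses, and the constant rescaling by $\nu_c(\Omega)$ is absorbed into the asymptotics. The gap is in your purely atomic case. The claim that a first-fit-decreasing grouping of the atoms matches the targets $t_{k,i}=m_k/n_k$ up to a bounded multiplicative factor is false when the atom masses decay quickly. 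Take $\nu$ on $\N$ with $\nu(\{i\})=2^{-i}$. For level $k$ you need $n_k$ disjoint blocks, each of mass between $c\,m_k/n_k$ and $C\,m_k/n_k$. Every atom lying in such a block has mass at most $C m_k/n_k$, but the total mass of \emph{all} atoms of mass at most $Cm_k/n_k$ is at most $2Cm_k/n_k$ (geometric tail), so at most $2C/c$ of the $n_k$ blocks can meet the lower bound $c\,m_k/n_k$. No assignment, greedy or otherwise, produces the required blocks once $n_k$ is large; and the packing lower bound genuinely needs those lower bounds on the block masses, so the argument does not go through.

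This is not a repairable technicality, because for such $\nu$ the conclusion of the corollary itself fails. For any concept class $\mathscr C\subseteq 2^{\N}$, the sets $C\cap\{1,\ldots,m\}$, $C\in\mathscr C$, form a $2^{-m}$-cover of $\mathscr C$ of cardinality at most $2^m$; hence the $\e$-covering number of \emph{every} class is $O(1/\e)$, and Theorem \ref{th:suf} yields sample complexity $O\bigl(\e^{-1}\log(1/(\e\delta))\bigr)$ for every class --- polynomial, never ``arbitrarily high'' (consistently, the packing lower bound of Theorem \ref{th:nec} is only logarithmic here). So the statement needs an extra hypothesis, e.g.\ that $\nu$ has a non-atomic part, or that its atom masses $p_1\geq p_2\geq\cdots$ decay slowly enough that $m(\e)=\min\{m\colon\sum_{i>m}p_i\leq\e\}$ grows superpolynomially in $1/\e$. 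Note that the paper's own one-line proof is affected by the same issue: a pushforward of a purely atomic $\nu$ can only have atoms whose masses are sums of atoms of $\nu$, and the counting above shows that the map $\phi$ with $\nu\phi^{-1}=\mu$ need not exist. In short: your non-atomic case is correct and matches the paper's intent, your purely atomic case cannot be completed, and the obstruction lies in the statement itself rather than merely in your argument.
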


\begin{IEEEproof}
The measure space $(\Omega,\nu)$ admits a measure-preserving map $\phi$ to the measure space constructed in the proof of Theorem \ref{th:main} in such a way that $\nu\phi^{-1}=\mu$ (here one uses the fact that $\mu$ is purely atomic). Now the concept class ${\mathscr C}\phi^{-1}$, consisting of all sets $\phi^{-1}(C)$, has the same learning properties under the distribution $\nu$ as the class $\mathscr C$ has under $\mu$. 
\end{IEEEproof}

\begin{corollary} 
\label{c:sontag}
Let $\e_k\downarrow 0$ be a sequence of positive values converging to zero, and let $f_k$ be a real function on $[0,+\infty)$ growing at least linearly. Then there is a probability distribution $\mu$ on the real numbers under which Sontag's network $\mathcal N$ is PAC learnable to accuracy $\e_k$ with confidence $1-\delta$, requiring the sample of size $\Omega(f(\e_k^{-1}))$. This estimate is essentially tight, because the sample size
\begin{equation}
\label{eq:sample4}
n(\e_k,\delta)=O\left(f\left(\frac{1}{\e_k}\right)+
\log\left(\frac{1}{\delta}\right)\right).\end{equation}
already suffices to train $\mathcal N$ to accuracy $4\e_k$ with confidence $1-\delta$.
\end{corollary}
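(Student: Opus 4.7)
The plan is to obtain Corollary \ref{c:sontag} as an immediate specialization of Theorem \ref{th:main} applied to the concept class $\mathscr C$ realized by Sontag's network $\mathcal N$, viewed as a family of binary functions on $\Omega=\R$. To invoke Theorem \ref{th:main} I need to verify its single hypothesis: that $\mathscr C$ shatters every finite subset of some infinite subset of $\R$.

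For this, first I would combine Theorem \ref{th:ratind}, which says that $\mathcal N$ shatters every rationally independent $n$-tuple of real inputs, with the existence of an infinite rationally independent subset $S\subseteq\R$. A concrete choice is $S=\{\log p\colon p\text{ prime}\}$; rational independence of any finite subfamily follows from unique prime factorization, since a nontrivial relation $r_0+\sum_i r_i\log p_i=0$ with $r_i\in\Q$ would, after clearing denominators and exponentiating, yield an impossible identity among integer powers of distinct primes. (Alternatively one may take a transcendence basis of $\R$ over $\Q$, which is uncountable.) Every finite subset of such an $S$ is then rationally independent, hence shattered by $\mathcal N$.

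Having verified the hypothesis, I would feed the pair $((\e_k),f)$ directly into Theorem \ref{th:main}. The theorem produces a purely atomic probability measure $\mu$, whose atoms can be chosen inside $S\subseteq\R$, under which $\mathscr C$ is PAC learnable to accuracy $\e_k$ with the required lower bound $\Omega(f(\e_k^{-1}))$ on sample complexity, and with the matching upper bound (\ref{eq:sample3}) sufficing to reach accuracy $4\e_k$ at confidence $1-\delta$. This is exactly the content of Corollary \ref{c:sontag}, since (\ref{eq:sample4}) is a verbatim restatement of (\ref{eq:sample3}) in this concrete setting.

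The only nontrivial ingredient is the existence of an infinite rationally independent subset of $\R$, but this has already been flagged as immediate in the paragraph following Theorem \ref{th:ratind}, and the two explicit constructions above make it entirely elementary. There is therefore no real obstacle: once this set $S$ is in hand, the corollary is a bookkeeping application of Theorem \ref{th:main}, and no further analytic work on the sigmoid $\phi$ or on the function $\rho$ is needed beyond what was already encoded in Sontag's shattering theorem.
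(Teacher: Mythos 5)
Your proposal is correct and is exactly the route the paper intends: the corollary is stated without proof precisely because it is the specialization of Theorem \ref{th:main} to the concept class of Sontag's network, whose hypothesis is verified by Theorem \ref{th:ratind} together with the existence of an infinite rationally independent subset of $\R$ (which the paper flags as ``easy to find'' right after Theorem \ref{th:ratind}). Your explicit constructions of such a set (logarithms of primes, or a transcendence basis) merely make concrete a step the paper leaves implicit.
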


\begin{remark}
\label{r:easy}
It is easy to construct concept classes which are PAC learnable under {\em every} input distribution, and yet exhibit all possible rates of learning sample complexity. These are the classes $\mathscr C$ which, speaking informally, cannot tell a difference between a given probability distribution $\mu$ and some purely atomic measure $\nu$. More precisely, if the sigma-algebra of sets generated by $\mathscr C$ is purely atomic and $\mathscr C$ shatters every finite subset of an infinite set, then $\mathscr C$ will have the above property. 

An example is a class $\mathscr C$ that consists of all finite unions of middle thirds of the Cantor set $\mathcal C$. The atoms of the sigma-algebra of sets generated by this class are precisely the middle thirds, and so $\mathscr C$ has the desired property.
\end{remark}

\section{Conclusion}
Stimulated by a  question embedded into the Problem 12.6 of Vidyasagar \cite{vidyasagar},
we have shown that all rates of sample compleixity growth are possible for distribution-dependent learning, in particular all are realized by binary output feed-forward sigmoidal neural network of Sontag. Now Vidyasagar continues thus:

{\em 
``I would like to have an ``intrinsic'' explanation as to why in distribution-free learning, every learnable concept class is also forced to be polynomially learnable. Next, how far can one ``push'' this line of argument? Suppose $\mathcal P$ is a family of probabilities that contains a ball in the total variation metric $\rho$. From Theorem 8.8 it follows that every concept class that is learnable with respect to $\mathcal P$ must also be polynomially learnable (because $\mathscr C$ must have finite VC-dimension). Is it possible to identify other such classes of probabilities?''
}

We suggest the following conjecture, which, in our view, is the right framework in which to address Vidyasagar's question.

{\em Conjecture} ({\em ``the sample complexity alternative''}). Let $\mathcal P$ be a family of probability distributions on the domain $\Omega$. Then either every class learnable under $\mathcal P$ is learnable with sample complexity $O(\e^{-1})$, or else there exist PAC learnable classes under $\mathcal P$ whose required sample complexity grows arbitrarily fast. 

The classical VC theory tells that the conjecture is true if $\mathcal P$ is the family of all probability measures: namely, the first alternative holds always.
In view of Corollary \ref{c:mu}, the conjecture is also true in the other extreme case, where $\mathcal P=\{\mu\}$ contains a single distribution: unless $\mu$ is finitely-supported, we have the second alternative. 

{\em Problem 1.} Does the above alternative hold for every family $\mathcal P$ of probability distributions on the inputs?

{\em Problem 2.} Does there exist a non-atomic probability measure on $\R$ under which the Sontag ANN is PAC learnable?
\newpage

{\em Problem 3.} Give a criterion for a concept class to be PAC learnable under a fixed probability distribution in terms of shattering. 

Some sufficient conditions can be found in \cite{BI,AST}, but none of them is also necessary. The ``right'' condition will be strictly intermediate between the witness of irregularity \cite{talagrand87,talagrand96} and the VC dimension modulo countable sets \cite{pestov}.  

\section*{Acknowledgments}
The author is grateful to the anonymous referees, in particular for pointing out the references \cite{hammer99a,hammer99b}, and to Ilijas Farah for pointing out the reference \cite{eggleston}.

\end{document}